\documentclass[11pt]{article}

\usepackage[final]{acl}

\usepackage{times}
\usepackage{latexsym}
\usepackage{amsmath,amssymb}
\usepackage{amsthm}
\newtheorem{proposition}{Proposition}[section]

\AtBeginDocument{%
  \thickmuskip=5mu\relax
  \medmuskip=4mu\relax}

\usepackage[T1]{fontenc}

\usepackage[utf8]{inputenc}

\usepackage{microtype}

\usepackage{inconsolata}

\usepackage{graphicx}
\usepackage{booktabs,multirow}
\usepackage{placeins}
\usepackage{enumitem}

\title{DPPM: Dual-Path Parametric Memory for Personalized Language Models}

\author{
  \textbf{Yuhao Chen\textsuperscript{1}},
  \textbf{Shuochen Liu\textsuperscript{1}},
  \textbf{Jiayao Shi\textsuperscript{1}},
  \textbf{Jian Hong\textsuperscript{1,2}} \\
  \textbf{Chen Cheng\textsuperscript{2}}, 
  \textbf{Xinyun Ding\textsuperscript{2}},
  \textbf{Tao Wang\textsuperscript{2}},
  \textbf{Ya Li\textsuperscript{2}},
  \textbf{Quan Liu\textsuperscript{2}},
  \textbf{Tong Xu\textsuperscript{1}} \\
  \textsuperscript{1}University of Science and Technology of China\quad
  \textsuperscript{2}iFLYTEK Research Group \\
  \texttt{isyuhaochen@mail.ustc.edu.cn}
}

\begin{document}
\maketitle
\begin{abstract}
Long-term personalization requires language models to use interaction history to track users' preferences across sessions. Parametric memory encodes this interaction history into model parameters or adapters, reducing the need to include it in the inference context. However, independent context compilation leaves cross-session integration unspecified, while recurrent updates can attenuate earlier evidence. To address these challenges, we propose \textbf{Dual-Path Parametric Memory (DPPM)}. Its Evidence path directly pools representations of the interaction history to preserve earlier evidence, while its Delta path sequentially updates an associative state to capture changes. Fusing both outputs produces history-conditioned LoRA adapters that combine evidence accumulation with ordered revision. Across multiple backbones, DPPM outperforms the evaluated baselines, achieving \textbf{54.22\%} on PersonaMem-v2 and \textbf{86.79\%} on PrefEval. These results suggest that DPPM provides a simple and effective design choice for cross-session personalized parametric memory.\footnote{Code: \url{https://github.com/isyuhaochen/DPPM}.}
\end{abstract}

\section{Introduction}

As large language models (LLMs) become persistent personal assistants, personalization increasingly depends on long-term memory. Such memory must preserve users' preferences, backgrounds, and constraints across sessions to inform subsequent responses~\citep{liu2026permabenchmarkingpersonalizedmemory,chen2026vehiclemembenchexecutablebenchmarkmultiuser}. To support this continuity, existing systems maintain textual records and construct response contexts through summarization, retrieval, or explicit memory editing~\citep{zhong2023memorybankenhancinglargelanguage,packer2024memgptllmsoperatingsystems,chhikara2025mem0buildingproductionreadyai}. While these records are easy to inspect and maintain, their effectiveness depends on accurate extraction, retrieval, and contextual reasoning, and including more of the interaction history increases inference-time context costs.

\begin{figure}[t]
  \centering
  \includegraphics[width=\linewidth]{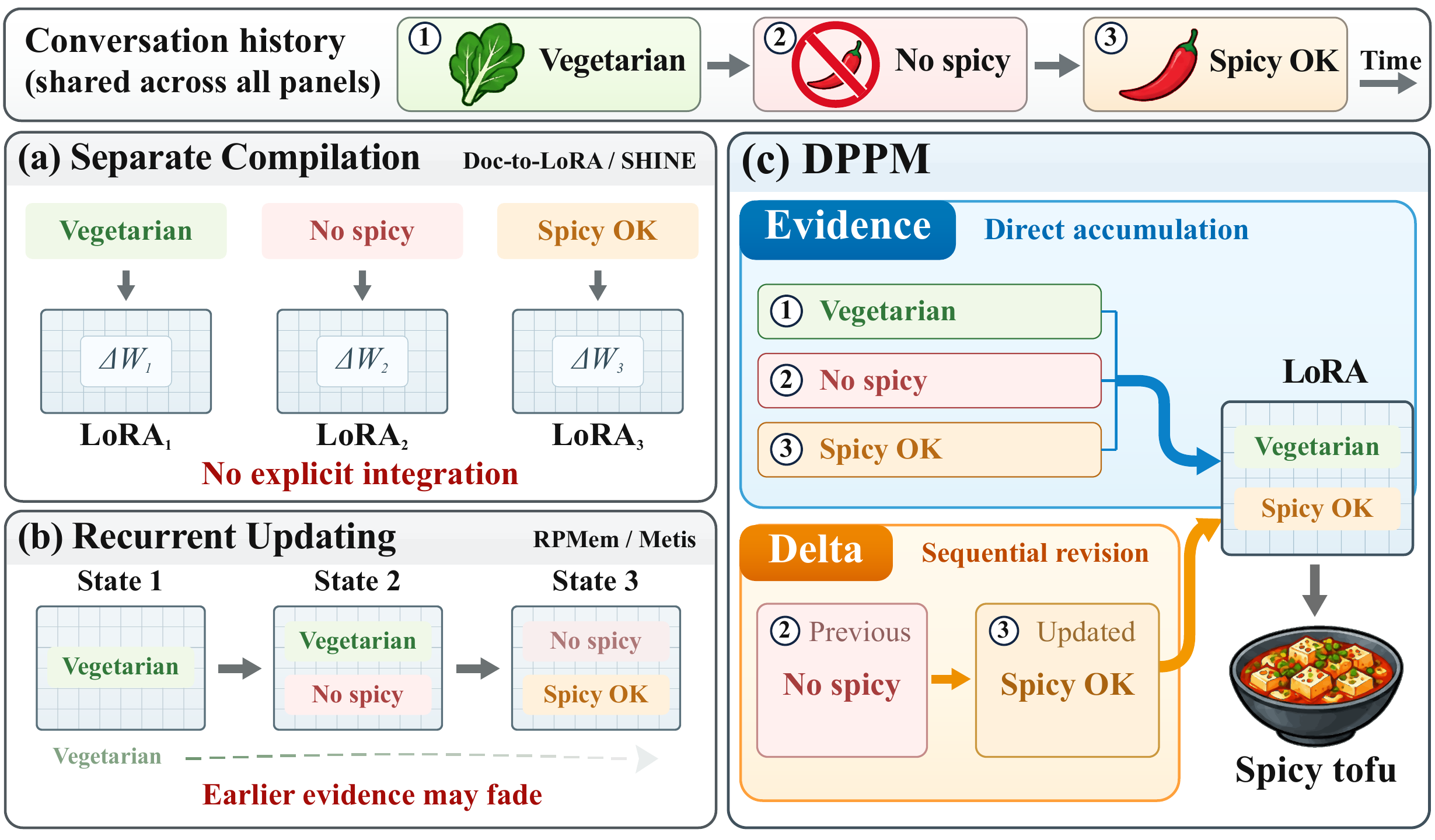}
  \caption{Illustrative comparison of cross-session memory. (a) Context compilation produces separate LoRA adapters. (b) Recurrent updates may dilute earlier evidence. (c) DPPM combines evidence accumulation and sequential revision to preserve dietary constraints while tracking preference changes.}
  \label{fig:dppm-overview}
\end{figure}

To reduce reliance on the interaction history at inference time, alternative approaches encode it into parameters, adapters, or internal states along two directions. Context compilation methods, such as Doc-to-LoRA~\citep{charakorn2026doctoloralearninginstantlyinternalize} and SHINE~\citep{liu2026shinescalableincontexthypernetwork}, use hypernetworks to map individual contexts to LoRA adapters. State maintenance methods integrate information sequentially. RPMem~\citep{zhao2026rpmemlearninglongtermrecurrent} recurrently consolidates session representations into memory adapters, while Metis~\citep{zhang2026metismemoryfoundationmodel} updates memory matrices within the backbone.

Despite advances in encoding interaction history and updating memory across sessions, two limitations remain. First, independently compiling segments into adapters does not itself support selective retention and revision as new sessions arrive~\citep{charakorn2026doctoloralearninginstantlyinternalize}. Second, recurrent consolidation enables revision, but successive updates to a shared state can attenuate earlier contributions~\citep{zhao2026rpmemlearninglongtermrecurrent,yang2025gateddeltanetworksimproving}. These limitations motivate preserving direct access to earlier evidence while supporting selective revision as the interaction history grows.

To address these challenges, we propose Dual-Path Parametric Memory (DPPM), which separates evidence accumulation from sequential state updates within a unified parametric memory interface. The Evidence path directly aggregates representations of the interaction history, while the Delta path maintains an associative state in chronological order. Their outputs jointly form the final memory, combining accumulated evidence with order-dependent updates.

Specifically, DPPM learns cross-session memory consolidation on top of a frozen Doc-to-LoRA compiler, avoiding additional compiler training. The compiler encodes chronological segments of the interaction history into representations shared by both paths. In the Evidence path, coordinate-wise, content-dependent weighting directly aggregates these representations, allowing early evidence to contribute without passing through successive state updates. In the Delta path, the same segment representations are processed in chronological order. A content-derived associative key retrieves a prediction of the current segment representation from the existing state, and the prediction residual is written back to capture order-dependent changes. Finally, learnable fusion and scale calibration combine the two outputs, and a frozen decoder converts the fused memory into history-conditioned LoRA adapters for personalized generation. Training updates only the lightweight memory modules, while the compiler, decoder, and language-model backbone remain frozen.

We evaluate DPPM on PersonaMem-v2~\citep{jiang2025personamemv2personalizedintelligencelearning} and PrefEval~\citep{zhao2025llmsrecognizepreferencesevaluating} across three backbones spanning different architectures and model sizes (2B--7B parameters). With Qwen3-4B, DPPM outperforms the evaluated baselines, achieving \textbf{54.22\%} accuracy on PersonaMem-v2 and \textbf{86.79\%} accuracy on PrefEval. These results suggest that DPPM offers a simple and effective approach to cross-session personalized parametric memory. In summary, this work makes the following three key contributions:
\begin{enumerate}[label=(\arabic*),wide=0pt,nosep,
  before={\clubpenalty=10000\widowpenalty=10000}]
  \item We propose DPPM, separating evidence accumulation from sequential updates for cross-session memory without compiler retraining.
  \item We analyze how direct accumulation mitigates evidence attenuation and sequential updates capture order-dependent changes.
  \item We validate DPPM across benchmarks, model architectures, and scales through ablations and controlled experiments.
\end{enumerate}

\section{Preliminary}
\label{sec:preliminary}

\subsection{Parametric Memory Workflow}

Let $\mathcal{C}_{1:T}=(c_1,\ldots,c_T)$ denote the interaction history, divided into $T$ chronological segments. Each segment $c_t$ is a contiguous portion of the interaction history. The consolidated memory is
\begin{equation}
  M_T=U(m_1,\ldots,m_T),
  \label{eq:memory-workflow}
\end{equation}
where $m_t=E(c_t)$ is the compiled memory for segment $c_t$, and $U$ aggregates these memories in chronological order. A decoder produces memory parameters $\Lambda_T=D(M_T)$. Let $\theta$ denote the frozen backbone parameters and $\oplus$ the application of adapter updates, giving $\theta_T=\theta\oplus\Lambda_T$. For a new query $x$, the response distribution is
\begin{equation}
  p_{\theta_T}(y\mid x)=\prod_{j=1}^{|y|}p_{\theta_T}(y_j\mid x,y_{<j}).
  \label{eq:memory-inference}
\end{equation}

Here $y_j$ is the $j$th response token and $y_{<j}$ its prefix. Inference uses compiled memory without the interaction history in the prompt.

\subsection{Hypernetwork-Generated LoRA}

Following Doc-to-LoRA~\citep{charakorn2026doctoloralearninginstantlyinternalize}, the hypernetwork $H_\phi=D\circ E$, parameterized by $\phi$, comprises a context encoder $E$, which maps text to latent memory, and an adapter decoder $D$, which generates LoRA factors. For a segment $c$ of the interaction history and target matrix $W_\ell$, indexed by $\ell$, these factors are
\begin{equation}
  (A_\ell(c),B_\ell(c))=H_\phi^{(\ell)}(c).
  \label{eq:hypernetwork-lora}
\end{equation}

They adapt the frozen weight through
\begin{equation}
  W_\ell(c)=W_\ell+s_\ell B_\ell(c)A_\ell(c).
  \label{eq:lora-update}
\end{equation}

Here $r_{\mathrm L}$ is the LoRA rank, $A_\ell\in\mathbb{R}^{r_{\mathrm L}\times d_{\mathrm{in}}}$, and $B_\ell\in\mathbb{R}^{d_{\mathrm{out}}\times r_{\mathrm L}}$. The dimensions $d_{\mathrm{in}}$ and $d_{\mathrm{out}}$ are the input and output widths of $W_\ell$, and $s_\ell$ scales the update. Once trained, the shared hypernetwork generates adapters in a forward pass without per-segment optimization.

\begin{figure*}[t]
  \centering
  \includegraphics[width=\textwidth]{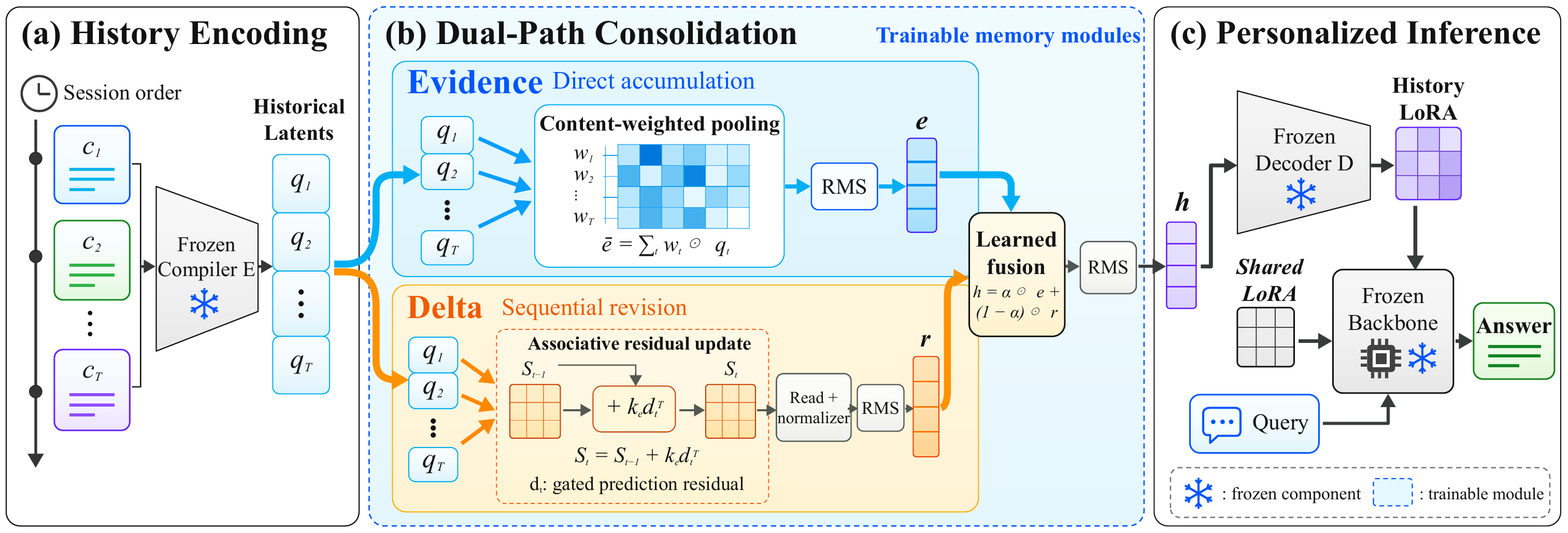}
  \caption{Overall architecture of DPPM. A frozen compiler encodes the interaction history segment by segment, and DPPM consolidates the resulting representations through Evidence pooling and sequential Delta updates. Their calibrated outputs are fused and decoded into history-conditioned LoRA adapters. }
  \label{fig:dppm-framework}
\end{figure*}
\section{Method}
\label{sec:method}

\subsection{Overview and Setup}
Using the interaction history $\mathcal{C}_{1:T}$ defined in Section~\ref{sec:preliminary}, we encode each segment $c_t$ with the frozen compiler $E$. At a fixed position in the compiled memory $m_t$, let $q_t\in\mathbb{R}^d$ denote the segment representation, where $d$ is the latent width. We share consolidation parameters across positions and exclude the query and answer from the history.

DPPM instantiates the aggregation operator in Section~\ref{sec:preliminary} with two paths (Figure~\ref{fig:dppm-framework}). Evidence directly pools segment representations, while Delta updates an associative state in chronological order. Their fused output $h$ is decoded into a history-conditioned adapter. We first motivate the two paths, then define their computations.

\subsection{Motivation for Dual-Path Memory}
For the same interaction history $\mathcal{C}_{1:T}$, consider a recurrent memory. Its state after segment $t$ is $\mu_t=F_t(\mu_{t-1},q_t)$, where $F_t$ updates the previous state using the segment representation $q_t$. Compare two interaction histories that differ at an earlier segment $i$ but share all later segments, and write $\Delta \mu_t=\mu_t-\mu'_t$ for their state difference. If every subsequent update contracts this difference by at most a factor $0\leq\kappa<1$, then
\begin{equation}
 \|\Delta \mu_T\|_2\leq\kappa^{T-i}\|\Delta \mu_i\|_2,
 \label{eq:motivation-recurrence}
\end{equation}
where $\|\cdot\|_2$ is the Euclidean norm and $T-i$ is the age of the earlier contribution. Thus, its influence may attenuate through repeated updates. Proposition~\ref{prop:recurrent-contraction} gives sufficient conditions on the full update Jacobian, including state-dependent gates.

Evidence instead connects each segment representation directly to the pooled memory. Let $w_{i,j}$ be segment $i$'s pooling weight for latent coordinate $j$. If the largest and smallest scores across segments differ by at most a constant $B$ that does not grow with $T$, then
\begin{equation}
 w_{i,j}\geq[1+(T-1)e^B]^{-1}.
 \label{eq:motivation-evidence}
\end{equation}

For changes to $q_i$ that leave its scores unchanged, this also bounds how much the raw pooled vector changes relative to the input change (Proposition~\ref{prop:direct-evidence}). This lower bound depends on history length $T$ but not segment position $i$, so earlier and later segments share the same bound. Since it still decreases with $T$, direct accumulation is motivated as a way to mitigate age-related attenuation, rather than eliminate forgetting.

However, Evidence pooling gives the same result when fixed segment representations are reordered, so it cannot distinguish opposite update orders requiring different answers (Proposition~\ref{prop:pooling-order}). Delta supplies ordered revision.

\subsection{Evidence Accumulation}
To provide a direct route from the interaction history to memory, Evidence weights and pools the segment representations. Using normalized representations $z_t$ of the segment $c_t$, we apply a learned projection $W_E\in\mathbb{R}^{d\times d}$ and coordinate-wise temporal softmax to obtain
{\setlength{\jot}{2pt}
\begin{gather}
 s_t=W_Ez_t,\quad w_{t,j}=\frac{\exp(s_{t,j})}{\sum_{\tau=1}^T\exp(s_{\tau,j})},
 \label{eq:dppm-evidence}\\
 \bar e=\sum_{t=1}^T w_t\odot q_t.
 \label{eq:dppm-pooling}
\end{gather}}

Here $s_t,w_t\in\mathbb{R}^d$ are score and weight vectors, and $\odot$ denotes element-wise multiplication. Each coordinate $j$ pools segment representations using scores from normalized representations.

To compensate for magnitude changes from pooling, we rescale $\bar e$ into the calibrated Evidence representation $e$. The target root-mean-square (RMS) scale $\rho_E$ is derived from the coordinate-wise weighted second moment of the segment representations. This rescaling, denoted by $\mathcal{R}$, uses a small positive floor for numerical stability and also calibrates the Delta and fused representations. Appendix~\ref{app:calibration} gives the definitions.

\subsection{Sequential Delta Updates}
To complement Evidence with ordered revision, Delta processes the same segment representations sequentially. For key dimension $K$, it computes an unnormalized key $p_t\in\mathbb{R}^K$, a unit key $k_t$, and a coordinate-wise write gate $\beta_t\in(0,1)^d$,
\begin{equation}
 p_t=\operatorname{ELU}(W_Kz_t)+\mathbf{1}_K,\quad k_t=\frac{p_t}{\|p_t\|_2},
 \label{eq:dppm-key}
\end{equation}
\begin{equation}
 \beta_t=\sigma(W_\beta z_t+b_\beta),
 \label{eq:dppm-write-gate}
\end{equation}

Here ELU is the element-wise exponential linear unit with unit parameter, $\sigma$ is the sigmoid, and $\mathbf{1}_n$ denotes an $n$-dimensional all-ones vector. The learned projections have shapes $W_K\in\mathbb{R}^{K\times d}$ and $W_\beta\in\mathbb{R}^{d\times d}$, with bias $b_\beta\in\mathbb{R}^d$. Using this key and gate, Delta reads a prediction $\hat q_t\in\mathbb{R}^d$ from the associative state $S_{t-1}\in\mathbb{R}^{K\times d}$ and writes back a gated residual $d_t\in\mathbb{R}^d$,
\begin{equation}
 \hat q_t=S_{t-1}^\top k_t,
 \label{eq:dppm-prediction}
\end{equation}
\begin{equation}
 d_t=\beta_t\odot(q_t-\hat q_t),
 \label{eq:dppm-residual}
\end{equation}
\begin{equation}
 S_t=S_{t-1}+k_td_t^\top.
 \label{eq:dppm-delta}
\end{equation}
The delta-rule updates~\citep{schlag2021lineartransformerssecretlyfast,yang2025gateddeltanetworksimproving} correct the current key's prediction. At another key $k'\in\mathbb{R}^K$, the readout changes by $(k_t^\top k')d_t$. Key similarity controls interference.

To stabilize the readout scale, a parallel normalizer $N_t\in\mathbb{R}^{K\times d}$ follows the same updates as $S_t$, using an all-ones target. After processing the interaction history, a learned read key $a=\operatorname{softmax}(u)$, with $u\in\mathbb{R}^K$, gives the normalized readout
\begin{equation}
 \bar r=(S_T^\top a)\oslash[N_T^\top a]_\delta,
 \label{eq:dppm-readout}
\end{equation}
where $\oslash$ denotes element-wise division and $[\cdot]_\delta$ applies a small positive floor to the denominator. RMS calibration then produces the Delta representation $r$. Appendix~\ref{app:delta-normalization} details the normalization, initialization, and calibration.

\subsection{Fusion and Adapter Construction}
We fuse Evidence $e$ and Delta $r$ using learned logits $v\in\mathbb{R}^d$ and weights $\alpha\in(0,1)^d$,
\begin{equation}
 \alpha=\sigma(v),\quad \bar h=\alpha\odot e+(\mathbf{1}_d-\alpha)\odot r.
 \label{eq:dppm-fusion}
\end{equation}
RMS calibration then produces $h$ using a target scale derived from both paths (Appendix~\ref{app:calibration}). Both $\alpha$ and the read key $a$ remain fixed after training.

Finally, the frozen decoder generates LoRA factors $(A_\ell^H,B_\ell^H)=D^{(\ell)}(h)$, where $D^{(\ell)}$ denotes its output for target matrix $W_\ell$. Following Equation~\ref{eq:lora-update}, the serving weight is
\begin{equation}
 W_\ell^H=W_\ell+s_\ell B_\ell^H A_\ell^H+\Delta W_{\ell,\mathrm{shared}}.
 \label{eq:dppm-adapter-weight}
\end{equation}
The product $B_\ell^H A_\ell^H$ forms the history-conditioned low-rank update. The frozen shared-adapter update $\Delta W_{\ell,\mathrm{shared}}$ is added once.

\subsection{Answer Supervision}
We train both paths and their fusion with cross-entropy over answers. Gradients propagate through the decoder and backbone to update only the memory modules. The compiler, decoder, shared adapter, and backbone remain frozen.

\section{Experiments}
\subsection{Experimental Setup}
\label{sec:experimental-setup}

\paragraph{Dataset.}
Following RPMem~\citep{zhao2026rpmemlearninglongtermrecurrent}, we use the processed PersonaMem-v2 32K-Text split~\citep{jiang2025personamemv2personalizedintelligencelearning} with 18,527 training, 2,059 validation, and 5,000 test examples, excluding test personas from training and validation. PrefEval~\citep{zhao2025llmsrecognizepreferencesevaluating} uses a fixed topic split (seed 42), with 16 training and four held-out topics. Three preference forms at 10, 70, and 300 turns yield 7,380 training and 1,620 test instances.

\paragraph{Baselines.}
We compare No Context and Full Context, textual memories from Rolling Summary following RPMem~\citep{zhao2026rpmemlearninglongtermrecurrent}, top-10 BGE-M3 retrieval~\citep{chen2025m3embeddingmultilingualitymultifunctionalitymultigranularity}, Mem0~\citep{chhikara2025mem0buildingproductionreadyai}, and LightMem~\citep{fang2026lightmemlightweightefficientmemoryaugmented}. Parametric baselines include D2L~\citep{charakorn2026doctoloralearninginstantlyinternalize}, PLUME~\citep{shi2026evolvingcontextparameterizationlarge}, RPMem~\citep{zhao2026rpmemlearninglongtermrecurrent}, $\delta$-mem~\citep{lei2026deltamemefficientonlinememory}, and Metis~\citep{zhang2026metismemoryfoundationmodel}. For a fair comparison, our RPMem baseline uses the same frozen D2L encoder as DPPM, without additional encoder-training data. We use released checkpoints for $\delta$-mem and Metis. Gold State provides annotated user state as a reference.

\begin{table*}[!t]
  \centering
  \small
  \setlength{\tabcolsep}{3pt}
  \begin{tabular*}{\textwidth}{@{\extracolsep{\fill}}llrrrrrrrr@{}}
\toprule
\multirow{2}{*}{\textbf{Type}} & \multirow{2}{*}{\textbf{Method}} & \multicolumn{3}{c}{\textbf{PersonaMem-v2}} & \multicolumn{4}{c}{\textbf{PrefEval}}  & \multirow{2}{*}{\textbf{Avg.}} \\
\cmidrule(lr){3-5}\cmidrule(lr){6-9}
 & & Overall & Self & Current & Overall & 10 & 70 & 300  & \\
\midrule
Reference & No Context & 28.54 & 29.83 & 30.38 & 37.72 & 37.04 & 39.26 & 36.85 & 33.13 \\
 & Gold State & 61.08 & 67.44 & 67.44 & 93.52 & 92.78 & 94.07 & 93.70 & 77.30 \\
\midrule
Textual & Rolling Summary & 29.16 & 30.93 & 31.12 & 40.49 & 38.70 & 40.00 & 42.78 & 34.83 \\
 & LightMem & 30.66 & 31.76 & 32.46 & 71.60 & \underline{80.37} & 69.63 & 64.81 & 51.13 \\
 & Mem0 & 30.46 & 31.93 & 33.06 & 75.49 & 77.96 & 76.11 & 72.41 & 52.98 \\
 & RAG (BGE-M3) & 32.50 & 34.75 & 34.40 & 57.96 & 65.56 & 57.59 & 50.74 & 45.23 \\
 & Full Context & 32.96 & 35.89 & 36.68 & 45.31 & 51.11 & 45.74 & 39.07 & 39.14 \\
\midrule
Parametric & Direct D2L & 24.86 & 25.15 & 24.92 & 30.06 & 40.19 & 25.56 & 24.44 & 27.46 \\
 & PLUME & 30.98 & 32.80 & 34.63 & 40.12 & 45.74 & 39.63 & 35.00 & 35.55 \\
 & $\delta$-mem & 30.58 & 32.72 & 34.18 & 37.22 & 36.11 & 37.96 & 37.59 & 33.90 \\
 & Metis-4B & 30.86 & 32.05 & 32.56 & 38.64 & 40.00 & 40.00 & 35.93 & 34.75 \\
 & Metis-9B & 29.14 & 28.45 & 30.05 & 38.09 & 36.48 & 39.63 & 38.15 & 33.62 \\
 & Metis-27B & 34.68 & 36.62 & 38.35 & 43.21 & 44.26 & 42.41 & 42.96 & 38.95 \\
 & RPMem & \underline{45.73} & \underline{49.46} & \underline{53.38} & \underline{82.47} & 79.56 & \underline{83.81} & \underline{84.04} & \underline{64.10} \\
 & \textbf{DPPM (Ours)} & \textbf{54.22} & \textbf{54.69} & \textbf{58.38} & \textbf{86.79} & \textbf{84.52} & \textbf{88.15} & \textbf{87.70} & \textbf{70.51} \\
\bottomrule
\end{tabular*}

  \caption{Main results on both benchmarks (accuracy, \%). Self and Current select \texttt{who=self} and \texttt{updated=False}. PrefEval interval scores average three preference forms. Avg. averages the two Overall scores. \textbf{Bold} and \underline{underlining} indicate the best and second-best scores, excluding the annotated Gold State reference. Metis uses its native Qwen3.5 backbone at the indicated size; the other methods use Qwen3-4B.}
  \label{tab:main-results}
\end{table*}

\paragraph{Hyperparameters.}
We use Qwen3-4B-Instruct-2507, Gemma-2-2B-IT, and Mistral-7B-Instruct-v0.2 with released D2L compilers. DPPM is trained separately per dataset for five epochs using AdamW, learning rate $10^{-3}$, weight decay $0.01$, batch size 1, and gradient clipping at 1. We evaluate the final checkpoint of each run; further settings appear in Appendix~\ref{app:configuration}.

\subsection{Main Results}
\label{sec:main-results}

\begin{table*}[!t]
  \centering
  \small
  \setlength{\tabcolsep}{2pt}
  \begin{tabular*}{\textwidth}{@{\extracolsep{\fill}}llrrrrrrr@{}}
\toprule
\multirow{2}{*}{\textbf{Backbone}} & \multirow{2}{*}{\textbf{Method}} & \multicolumn{3}{c}{\textbf{PersonaMem-v2}} & \multicolumn{4}{c}{\textbf{PrefEval}} \\
\cmidrule(lr){3-5}\cmidrule(lr){6-9}
 & & Overall & Self & Current & Overall & 10 & 70 & 300 \\
\midrule
\multirow{4}{*}{\shortstack[l]{Gemma-2-2B\\IT}} & RPMem & 29.17$_{\pm0.34}$ & 30.28$_{\pm0.40}$ & 30.88$_{\pm0.44}$ & 75.01$_{\pm0.92}$ & 73.07$_{\pm1.70}$ & 75.70$_{\pm1.03}$ & 76.26$_{\pm1.30}$ \\
 & Evidence only & \underline{38.20}$_{\pm0.23}$ & \underline{41.34}$_{\pm0.24}$ & \underline{44.50}$_{\pm0.24}$ & \textbf{79.81}$_{\pm0.58}$ & 77.52$_{\pm0.71}$ & \textbf{81.33}$_{\pm1.19}$ & \textbf{80.59}$_{\pm1.25}$ \\
 & Delta only & 37.68$_{\pm0.69}$ & 40.96$_{\pm0.83}$ & 43.13$_{\pm0.95}$ & \underline{79.38}$_{\pm1.24}$ & \underline{77.74}$_{\pm1.39}$ & \underline{80.15}$_{\pm1.09}$ & \underline{80.26}$_{\pm1.69}$ \\
 & \textbf{DPPM} & \textbf{40.31}$_{\pm0.54}$ & \textbf{43.71}$_{\pm0.63}$ & \textbf{46.77}$_{\pm0.79}$ & 79.20$_{\pm0.76}$ & \textbf{78.56}$_{\pm0.44}$ & 79.56$_{\pm0.59}$ & 79.48$_{\pm2.60}$ \\
\midrule
\multirow{4}{*}{\shortstack[l]{Qwen3-4B\\Instruct-2507}} & RPMem & 45.73$_{\pm0.24}$ & 49.46$_{\pm0.35}$ & 53.38$_{\pm0.35}$ & 82.47$_{\pm0.58}$ & 79.56$_{\pm1.10}$ & 83.81$_{\pm1.19}$ & 84.04$_{\pm1.27}$ \\
 & Evidence only & 48.09$_{\pm0.63}$ & 50.39$_{\pm0.70}$ & 54.06$_{\pm0.85}$ & 84.59$_{\pm0.74}$ & 81.67$_{\pm1.02}$ & 86.00$_{\pm1.89}$ & 86.11$_{\pm1.71}$ \\
 & Delta only & \underline{49.77}$_{\pm0.93}$ & \underline{51.16}$_{\pm1.32}$ & \underline{54.80}$_{\pm1.22}$ & \underline{85.98}$_{\pm0.84}$ & \underline{84.48}$_{\pm0.24}$ & \underline{87.00}$_{\pm2.05}$ & \underline{86.44}$_{\pm0.88}$ \\
 & \textbf{DPPM} & \textbf{54.22}$_{\pm0.48}$ & \textbf{54.69}$_{\pm0.88}$ & \textbf{58.38}$_{\pm0.94}$ & \textbf{86.79}$_{\pm0.68}$ & \textbf{84.52}$_{\pm0.89}$ & \textbf{88.15}$_{\pm1.01}$ & \textbf{87.70}$_{\pm0.64}$ \\
\midrule
\multirow{4}{*}{\shortstack[l]{Mistral-7B\\Instruct-v0.2}} & RPMem & 31.24$_{\pm0.61}$ & 32.08$_{\pm0.71}$ & 33.32$_{\pm0.64}$ & 79.53$_{\pm0.32}$ & 79.52$_{\pm0.83}$ & 79.26$_{\pm0.57}$ & 79.81$_{\pm1.23}$ \\
 & Evidence only & \underline{42.29}$_{\pm0.27}$ & \underline{45.08}$_{\pm0.35}$ & \underline{48.65}$_{\pm0.34}$ & 82.65$_{\pm0.89}$ & 81.41$_{\pm1.04}$ & 83.37$_{\pm1.86}$ & 83.19$_{\pm1.73}$ \\
 & Delta only & 39.98$_{\pm0.36}$ & 42.02$_{\pm0.65}$ & 45.19$_{\pm0.69}$ & \underline{84.49}$_{\pm1.19}$ & \underline{82.59}$_{\pm1.02}$ & \underline{85.04}$_{\pm1.19}$ & \underline{85.85}$_{\pm2.09}$ \\
 & \textbf{DPPM} & \textbf{45.22}$_{\pm0.33}$ & \textbf{47.78}$_{\pm0.46}$ & \textbf{51.42}$_{\pm0.50}$ & \textbf{84.84}$_{\pm1.11}$ & \textbf{83.26}$_{\pm0.86}$ & \textbf{85.26}$_{\pm1.06}$ & \textbf{86.00}$_{\pm1.63}$ \\
\bottomrule
\end{tabular*}

  \caption{Memory alternatives across three backbones (accuracy, \%). All entries are five-run means, with sample standard deviations in subscripts. Best and second-best means are bold and underlined within each backbone.}
  \label{tab:cross-backbone}
\end{table*}

Table~\ref{tab:main-results} shows that DPPM reaches \textbf{54.22\%} on PersonaMem-v2 and \textbf{86.79\%} on PrefEval, improving over RPMem by 8.50 and 4.32 percentage points. The improvement extends across all seven reported metrics, including both information about the focal user and the currently valid state. On PrefEval, the advantage remains positive at every dialogue interval, including a 3.67-point gain after 300 intervening turns. Thus, the overall result reflects benefits across several evaluation conditions rather than a single favorable subset.

The strongest textual baseline differs across benchmarks, with Full Context leading on PersonaMem-v2 and Mem0 on PrefEval. DPPM exceeds their Overall scores by 21.26 and 11.30 points, respectively. Providing or retrieving text from the interaction history therefore leaves substantial room for better personalization in these settings. Direct D2L also trails the trained consolidation methods, suggesting that compiling the interaction history into parameters benefits from learning how its segments should be integrated for downstream answers. The significance comparison with RPMem appears in Appendix~\ref{app:significance}.

\subsection{Ablation Study}
\paragraph{Cross-backbone architectural comparison.}
\label{sec:backbone-results}

Table~\ref{tab:cross-backbone} compares the four memory designs within each backbone and its released compiler. DPPM outperforms RPMem on both benchmarks for Gemma, Qwen, and Mistral. The gains range from 8.50 to 13.98 points on PersonaMem-v2 and from 4.19 to 5.31 points on PrefEval. The largest gains occur with Mistral, while Gemma confirms gains with a smaller backbone.

Architectural ablations use Evidence only and Delta only as single-path controls to examine the complementarity of the two memory mechanisms. Their relative strengths vary across tasks and backbones, indicating that direct evidence accumulation and sequential state updates provide distinct memory signals. Combining both paths improves PersonaMem-v2 Overall over either standalone variant on all three backbones and PrefEval Overall on Qwen and Mistral. On Qwen, DPPM exceeds the stronger standalone path by 4.45 and 0.81 percentage points, respectively. These results support combining complementary memory signals to enhance cross-session personalization.

\paragraph{Parameter-matched comparison.}
\label{sec:parameter-matching}
To examine whether DPPM's gains arise from a larger trainable budget, we compare the memory modules under the same frozen Qwen3-4B compiler--decoder interface. Original RPMem uses 524,800 trainable parameters, while DPPM uses only 2,564 more (0.49\%). We also evaluate an RPMem control matched at 527,364 parameters.

The control retains RPMem's recurrence, adding a rank-4 nonlinear gate correction with a biased $512\!\to\!4$ projection (2,052 parameters), a fixed orthonormal DCT basis, and 512 learned coordinate gains. Zero-initialized gains preserve the initial gate output; all added parameters are trained. All runs share data splits, frozen components, CE, optimizer settings, five epochs, seeds 42--46, and final-epoch evaluation.

\begin{table}[htbp]
  \centering
  \small
  \setlength{\tabcolsep}{2pt}
  \begin{tabular*}{\columnwidth}{@{\extracolsep{\fill}}lrrr@{}}
\toprule
\textbf{Memory} & \textbf{Params} & \textbf{Persona} & \textbf{PrefEval} \\
\midrule
RPMem & 524800 & \underline{45.73}$_{\pm0.24}$ & 82.47$_{\pm0.58}$ \\
RPMem-matched & 527364 & 45.49$_{\pm0.42}$ & \underline{82.56}$_{\pm0.90}$ \\
\textbf{DPPM} & 527364 & \textbf{54.22}$_{\pm0.48}$ & \textbf{86.79}$_{\pm0.68}$ \\
\bottomrule
\end{tabular*}

  \caption{Scores are five-run Overall accuracy means (\%), with sample standard deviations in subscripts. Persona denotes PersonaMem-v2.}
  \label{tab:parameter-comparison}
\end{table}

As shown in Table~\ref{tab:parameter-comparison}, increasing RPMem's budget to match DPPM yields 45.49\% on PersonaMem-v2 and 82.56\% on PrefEval, close to the original gate's results. With the same parameter count, DPPM improves these scores by 8.73 and 4.23 percentage points. This comparison supports dual-path consolidation beyond the small difference in trainable capacity.

\subsection{Further Analysis}
\label{sec:further-analysis}
\paragraph{Accuracy, latency, and memory.}

Figure~\ref{fig:latency}(a) shows that DPPM improves accuracy over RPMem by 8.50 percentage points at an answer-forward latency of 46.14 ms, only 2.43 ms higher. It also achieves higher accuracy than RAG and Full Context with approximately $3.1\times$ and $62.7\times$ faster answer computation, respectively.

To examine memory costs with longer histories, Figure~\ref{fig:latency}(b) uses one question selected by maximum official 128K history length, evaluated with both official interaction history versions. Full Context inputs grow from 33,143 to 139,319 tokens, increasing peak GPU memory from 10,203.91 to 18,245.97 MiB. DPPM remains at 7,762.32 MiB, matching RPMem and using 57.46\% less memory than Full Context for the longer history. Together, these results show improved personalization with low answer latency and a compact answering footprint after memory preparation.

\begin{figure*}[!t]
  \centering
  \includegraphics[width=\textwidth]{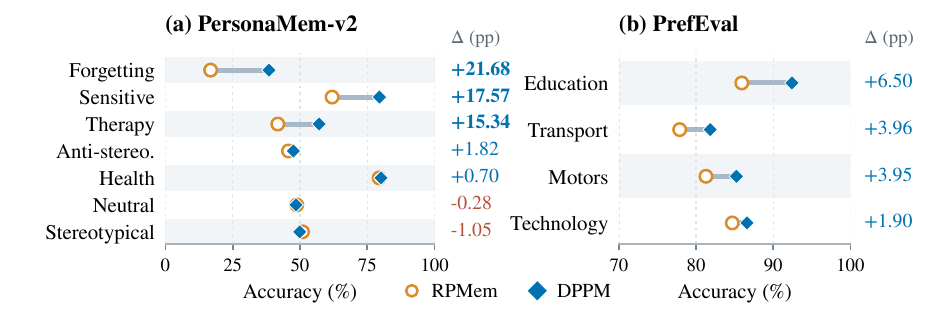}
  \caption{Category-level comparison of RPMem and DPPM. Circles and diamonds show five-run mean accuracies, and the right-hand annotations give DPPM's gain in percentage points. The panels cover all seven PersonaMem-v2 information types and all four held-out PrefEval topics. Axis ranges are shown separately for each benchmark.}
  \label{fig:category-gains}
\end{figure*}
\begin{figure}[!t]
  \centering
  \includegraphics[width=\linewidth]{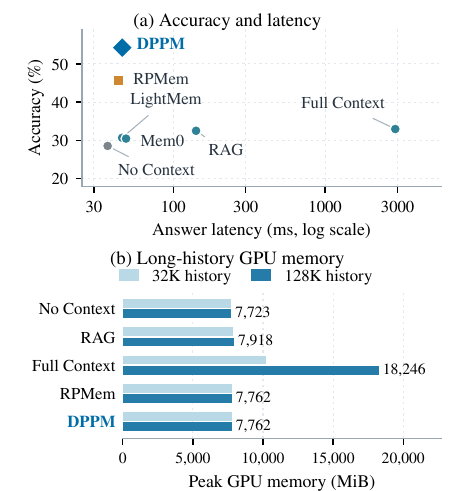}
  \caption{Answer-stage efficiency on Qwen3-4B with one A800 (batch size 1). (a) Full-test accuracy versus weighted latency. (b) Peak GPU memory for paired histories of one length-selected question. Results use three-forward medians after one warmup, excluding compilation and retrieval; KV cache is disabled in (b).}
  \label{fig:latency}
\end{figure}

\paragraph{Sources of improvement.}
\label{sec:category-results}

Figure~\ref{fig:category-gains} links DPPM's gains to the appropriate use of accumulated evidence. In PersonaMem-v2, Forget tests whether responses respect requests to stop using prior information, Sensitive tests restraint in using private details, and Therapy tests personalization from individual background~\citep{jiang2025personamemv2personalizedintelligencelearning}. DPPM improves these categories over RPMem by 21.68, 17.57, and 15.34 points, respectively. The Others subset tests whether information about another person is incorrectly attributed to the user; accuracy rises from 13.68\% to 50.23\%. These results indicate that much of DPPM's improvement comes from better attribution and more appropriate revision of memory use.

This pattern aligns with our motivation to preserve evidence while revising its applicability. The Evidence path gives earlier representations direct access to the final memory, while Delta supplies ordered updates as new interactions arrive. DPPM also outperforms both single-path controls on Others and Forget, supporting their complementary roles. On PrefEval, the four topics test preference following in different application domains~\citep{zhao2025llmsrecognizepreferencesevaluating}. Gains across all four, from 1.90 points on Technology to 6.50 on Education, show broader use of remembered preferences. Alongside the latent-retention results below, these findings support combining evidence accumulation and state revision for personalization.

\paragraph{Latent retention over time.}
To test whether direct evidence accumulation mitigates the attenuation of earlier information, we measure latent sensitivity as subsequent interactions accumulate. We freeze the seed-42 benchmark checkpoints and move one preference among 32 fixed distractors after a shared neutral initial segment, keeping each interaction history at 34 segments. For paired preference values A/B, latent sensitivity is $\|h_A-h_B\|_2/\|q_A-q_B\|_2$, where $h$ is the consolidated representation and $q$ the representation of the preference segment. We normalize each profile's sensitivity by its age-zero value and average over 24 profiles. Figure~\ref{fig:latent-retention} shows that at age 32, DPPM retains 0.970/0.667 for PersonaMem-v2/PrefEval checkpoints, compared with RPMem's 0.268/0.116 and Delta only's 0.367/0.300. Evidence only stays at 1.000. This supports a less age-sensitive latent response from the direct evidence route.

\begin{figure}[!t]
  \centering
  \includegraphics[width=\linewidth]{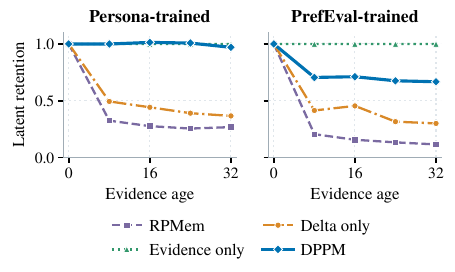}
  \caption{Latent retention versus the number of subsequent distractor segments.  Curves average 24 profiles after within-profile age-zero normalization.}
  \label{fig:latent-retention}
\end{figure}

\paragraph{Training objectives.}
\label{sec:objective-results}

To assess compatibility with different objectives, we fix DPPM's architecture and compare CE with distillation and reward-based variants (Table~\ref{tab:training-objectives}). Answer FKL matches teacher answer distributions. On-policy distillation (OPD)~\citep{agarwal2024onpolicydistillationlanguagemodels} applies token-level reverse KL to student rollouts capped at 16 or 64 tokens, using a frozen teacher conditioned on training-only user state. One-step GRPO~\citep{shao2024deepseekmathpushinglimitsmathematical} uses correctness rewards for sampled answers. All variants retain CE.

The short OPD recipe and answer FKL score below CE on both datasets. The 64-token OPD configuration, which also uses the revised teacher prompt, reaches 54.56\% and 87.04\%, giving the highest two-dataset average of 70.80\%. GRPO attains the highest PersonaMem-v2 score but trades off PrefEval accuracy, reducing its average below CE. These results suggest that appropriately configured on-policy feedback can complement the dual-path architecture, while plain CE remains competitive with simpler training.

\begin{table}[!t]
  \centering
  \small
  \setlength{\tabcolsep}{3pt}
  \begin{tabular*}{\columnwidth}{@{\extracolsep{\fill}}lrrr@{}}
\toprule
\textbf{Objective} & \textbf{Persona} & \textbf{PrefEval} & \textbf{Avg.} \\
\midrule
CE (DPPM) & 54.22 & \underline{86.79} & \underline{70.51} \\
+ Answer FKL & 53.98 & 86.05 & 70.02 \\
+ OPD (16) & 52.96 & 85.80 & 69.38 \\
+ OPD (64) & \underline{54.56} & \textbf{87.04} & \textbf{70.80} \\
+ GRPO & \textbf{54.80} & 85.06 & 69.93 \\
\bottomrule
\end{tabular*}

  \caption{Training objectives on DPPM (accuracy, \%). Persona denotes PersonaMem-v2. OPD (16/64) caps student rollouts at 16/64 tokens.}
  \label{tab:training-objectives}
\end{table}

\section{Related Work}

\subsection{Personalized Memory}

Personalized memory must distill growing histories into useful information for future responses. MemoryBank and MemGPT maintain persistent records and retrieve relevant content when needed~\citep{zhong2023memorybankenhancinglargelanguage,packer2024memgptllmsoperatingsystems}. As records accumulate, storage must be complemented by content selection and revision. Mem0 and A-MEM address these needs through memory extraction, organization, and updating~\citep{chhikara2025mem0buildingproductionreadyai,xu2025amemagenticmemoryllm}. Recent work learns adaptive memory strategies that balance shared knowledge and individual needs~\citep{xu-etal-2026-learning,han2026learningsharepersonalizehierarchical}.

These developments broaden evaluation from recalling history to using it appropriately. LoCoMo and LongMemEval assess conversational recall and reasoning~\citep{maharana2024evaluatinglongtermconversationalmemory,wu2025longmemevalbenchmarkingchatassistants}, while PrefEval and PersonaMem-v2 examine preference following and implicit personalization~\citep{zhao2025llmsrecognizepreferencesevaluating,jiang2025personamemv2personalizedintelligencelearning}. Evolving preferences and multi-user decisions introduce further demands on memory maintenance~\citep{liu2026permabenchmarkingpersonalizedmemory,chen2026vehiclemembenchexecutablebenchmarkmultiuser}. We approach these demands through parametric memory for cross-session personalization.

\subsection{Parametric Memory}

Encoding information into parameters reduces reliance on interaction history at inference time, but requires an effective writing mechanism. ROME and MEMIT edit factual associations~\citep{meng2023locatingeditingfactualassociations,meng2023masseditingmemorytransformer}, while FastMem internalizes prompts through optimization~\citep{zhu2024fastmemfastmemorizationprompt}. To avoid repeated optimization, Doc-to-LoRA and SHINE learn context-to-adapter hypernetworks that generate LoRA parameters in a forward pass~\citep{charakorn2026doctoloralearninginstantlyinternalize,liu2026shinescalableincontexthypernetwork}.

Efficient writing alone does not determine how accumulated memory should evolve. Sequential state updates address this requirement through associative rules in Fast Weight Programmers and Gated DeltaNet~\citep{schlag2021lineartransformerssecretlyfast,yang2025gateddeltanetworksimproving}, and learned adaptation in TTT and Titans~\citep{sun2025learninglearntesttime,behrouz2024titanslearningmemorizetest}. Building on state maintenance, Metis integrates forward-updated memory into the backbone~\citep{zhang2026metismemoryfoundationmodel}. RPMem, the work closest to ours, connects recurrent session consolidation with adapter generation~\citep{zhao2026rpmemlearninglongtermrecurrent}. DPPM extends this approach with direct evidence accumulation alongside sequential revision, reducing reliance on a single recurrent state.

\FloatBarrier
\section{Conclusion}
We introduced DPPM for integrating interaction history across sessions in personalized language models. DPPM combines evidence accumulation and sequential updates for retention and revision without extra compiler training. Experiments across two benchmarks and backbones of different architectures and sizes demonstrate its effectiveness, while theoretical analysis and controlled experiments support the complementary roles of the two paths. These findings highlight the value of preserving historical evidence while adapting its use, offering a practical design for long-term personalized parametric memory.

\section*{Limitations}
Although DPPM advances cross-session personalization through dual-path memory, our study has two limitations. First, computational constraints limit DPPM experiments to 2B--7B backbones, leaving larger-scale validation for future work. Second, our memory interface currently processes textual interaction histories. Extending it to images and speech is a natural next step.

\bibliography{custom}

\appendix
\section{AI Use Statement}
\label{app:ai-use}
Generative AI tools assisted with method implementation, proof drafting, manuscript writing and revision, and figure creation. They also supported discussions and refinement of the interpretation of experimental results. The authors reviewed the implementation and proofs and tested the code. The authors take full responsibility for the final content, including all AI-assisted text, claims, and artifacts.

\section{Experimental Configuration}
\label{app:configuration}
For each dataset and backbone, we train a separate memory module using the splits in Section~\ref{sec:experimental-setup}. The released D2L tokenizer divides interaction histories into chronological segments. The compiler, decoder, shared adapter, and backbone remain frozen. We evaluate final-epoch checkpoints using the settings in Table~\ref{tab:configuration}.

\begin{table}[htbp]
  \centering
  \small
  \setlength{\tabcolsep}{3pt}
  \begin{tabular*}{\columnwidth}{@{\extracolsep{\fill}}lr@{}}
    \toprule
    \textbf{Setting} & \textbf{Value} \\
    \midrule
    Optimizer & AdamW \\
    Learning rate & $10^{-3}$ \\
    Weight decay & $0.01$ \\
    Training epochs & 5 \\
    Questions per update & 1 \\
    Gradient norm clipping & 1.0 \\
    Training seeds & 42--46 \\
    Latent width $d$ / key width $K$ & 512 / 4 \\
    RMS floor $\epsilon$ / readout floor $\delta$ & $10^{-6}$ / $10^{-4}$ \\
    \bottomrule
  \end{tabular*}
  \caption{DPPM cross-entropy training settings.}
  \label{tab:configuration}
\end{table}

For the auxiliary objectives, answer CE remains active. OPD samples 16 or 64 student tokens and uses teacher-scored reverse KL, with a one-epoch warmup, auxiliary weight 0.5, and one auxiliary update every four examples. GRPO samples eight answer actions with correctness rewards. Latency is measured on one A800 at batch size 1, with one warmup and three timed forwards per question.

\section{Significance Analysis}
\label{app:significance}
We compare DPPM and RPMem on Qwen3-4B-Instruct-2507 across the seven component metrics in Table~\ref{tab:main-results}, excluding the derived Avg. column. For each metric, we pair runs with the same training seed (42--46) and apply a two-sided paired $t$-test to their accuracy differences. Holm correction controls the family-wise error rate at 0.05 within this seven-test family. The analysis assumes independent seeds and approximately normal seed-level differences on fixed evaluation sets.

\begin{table}[htbp]
  \centering
  \small
  \setlength{\tabcolsep}{2pt}
  \begin{tabular*}{\columnwidth}{@{\extracolsep{\fill}}lrrr@{}}
\toprule
\textbf{Metric} & \textbf{Gain (pp)} & \textbf{Raw $p$} & \textbf{Adj. $p$}  \\
\midrule
\multicolumn{4}{l}{\textit{PersonaMem-v2}} \\
Overall & +8.50 & $1.87\!\times\!10^{-6}$ & $1.31\!\times\!10^{-5}$ \\
Self & +5.23 & $1.04\!\times\!10^{-4}$ & $6.25\!\times\!10^{-4}$ \\
Current & +5.00 & $2.48\!\times\!10^{-4}$ & $1.24\!\times\!10^{-3}$ \\
\midrule
\multicolumn{4}{l}{\textit{PrefEval}} \\
Overall & +4.32 & $1.50\!\times\!10^{-3}$ & $6.02\!\times\!10^{-3}$ \\
10 turns & +4.96 & $2.19\!\times\!10^{-3}$ & $6.56\!\times\!10^{-3}$ \\
70 turns & +4.33 & $7.65\!\times\!10^{-3}$ & $9.47\!\times\!10^{-3}$ \\
300 turns & +3.67 & $4.73\!\times\!10^{-3}$ & $9.47\!\times\!10^{-3}$ \\
\bottomrule
\end{tabular*}

  \caption{DPPM versus RPMem on Qwen3-4B-Instruct-2507. Gains are in percentage points. Raw $p$-values use five paired training runs, and adjusted values use Holm correction across all seven metrics.}
  \label{tab:significance}
\end{table}

All seven mean differences favor DPPM, and every Holm-adjusted $p$-value is below 0.01. The gains remain significant for both Overall metrics and all five diagnostic or interval-specific measures in this comparison.

\section{Implementation Details}
\label{app:implementation}
\paragraph{Representations and initialization.}
Each compiled memory $m_t$ contains vectors indexed by layer, module, and latent slot. The method equations describe one such vector $q_t$ for $T\geq1$ segments. Learned projections are shared across positions, while memory states are maintained separately. We use non-affine LayerNorm for $z_t$, with numerical constant $10^{-5}$, and compute consolidation in float32. Initially, $W_E$, $W_\beta$, $u$, and $v$ are zero, $b_\beta=-\mathbf{1}_d$, and entries of $W_K$ follow $\mathcal{N}(0,0.04^2)$. Key normalization uses an L2-norm floor of $10^{-12}$.

\paragraph{Scale calibration.}
\label{app:calibration}
For $\xi\in\mathbb{R}^d$, let $\operatorname{RMS}(\xi)=\|\xi\|_2/\sqrt d$. With target scale $\rho\geq0$ and a small positive floor $\epsilon$, the shared rescaling operator is
\begin{equation}
 \mathcal{R}(\xi,\rho)=\frac{\rho\xi}{\max\{\operatorname{RMS}(\xi),\epsilon\}}.
 \label{eq:dppm-calibration}
\end{equation}
When the floor is inactive, rescaling preserves direction and sets the output RMS to $\rho$. The calibrated Evidence representation is $e=\mathcal{R}(\bar e,\rho_E)$, with $\rho_E\geq0$ defined by
\begin{equation}
 \rho_E^2=\frac{1}{d}\sum_{t=1}^T w_t^\top(q_t\odot q_t).
 \label{eq:dppm-evidence-scale}
\end{equation}

The fused representation uses $h=\mathcal{R}(\bar h,\rho_H)$ with nonnegative target scale
\begin{equation}
 \rho_H^2=\frac{1}{d}\sum_{j=1}^d[\alpha_je_j^2+(1-\alpha_j)r_j^2].
 \label{eq:dppm-fusion-scale}
\end{equation}

\paragraph{Delta normalization and initialization.}
\label{app:delta-normalization}
Using the same keys and gates as $S_t$, the normalizer tracks constant-target writes,
\begin{equation}
 N_t=N_{t-1}+k_t[\beta_t\odot(\mathbf{1}_d-N_{t-1}^\top k_t)]^\top.
 \label{eq:dppm-normalizer}
\end{equation}
Both recurrences start at $t=2$, with $S_1=k_1q_1^\top$ and $N_1=k_1\mathbf{1}_d^\top$. In Equation~\ref{eq:dppm-readout}, $[\xi]_\delta=\max(\xi,\delta)$ coordinate-wise, with a small positive floor $\delta$. The Delta representation is $r=\mathcal{R}(\bar r,\rho_D)$, where $\rho_D\geq0$ and
\begin{equation}
 \rho_D^2=\frac{1}{Td}\sum_{t=1}^T\|q_t\|_2^2.
 \label{eq:dppm-delta-scale}
\end{equation}

\paragraph{Evaluation conventions.}
The main DPPM, RPMem, and standalone-path comparisons use seeds 42--46. Auxiliary-objective variants use seed 42, with privileged teacher state confined to training. Their two-dataset Avg. is the arithmetic mean of the displayed Overall scores. PersonaMem-v2 Overall averages all 5,000 questions, while Self and Current select the annotations \texttt{who=self} and \texttt{updated=False}. PrefEval Overall averages all 1,620 instances; each interval averages three preference forms. Category gains and attribution subsets describe answer behavior.

\paragraph{Latent-retention protocol.}
The diagnostic uses frozen seed-42 checkpoints without additional training. Each of 24 profiles has a neutral prefix, one preference segment, and 32 fixed distractors. We move the preference to ages 0, 8, 16, 24, and 32. For each paired preference, the latent sensitivity in the main text is divided by that profile's age-zero sensitivity before averaging profiles. The curves measure latent-response retention with fixed history length and distractors.

\section{Analysis of the Two Memory Paths}
\label{app:memory-proofs}
Fix the learned parameters and let $Q=(q_1,\ldots,q_T)$ contain the segment representations at one latent position. The first two results compare sequences differing only at segment $i$; primes mark perturbed quantities. We use Euclidean and induced operator norms, with $\|\xi\|_\infty=\max_j|\xi_j|$. The results respectively characterize recurrent attenuation, direct access to evidence, and the role of order-sensitive updates.

\subsection{Conditional Recurrent Attenuation}
\begin{proposition}[Contractive state transitions]
\label{prop:recurrent-contraction}
Let $\mu_t=F_t(\mu_{t-1},q_t)$ and $\mu'_t=F_t(\mu'_{t-1},q'_t)$, with $q_t=q'_t$ for $t>i$. For each $t>i$, assume $F_t(\cdot,q_t)$ is continuously differentiable near the line segment joining $\mu_{t-1}$ and $\mu'_{t-1}$, and $\|D_\mu F_t\|_2\leq\kappa_t$ throughout it, where $\kappa_t\geq0$. Then
\begin{equation}
 \|\mu_T-\mu'_T\|_2\leq\Bigl(\prod_{t=i+1}^T\kappa_t\Bigr)\|\mu_i-\mu'_i\|_2.
 \label{eq:proof-contraction}
\end{equation}
If $0\leq\kappa_t\leq\kappa<1$, the multiplier is at most $\kappa^{T-i}$.
\end{proposition}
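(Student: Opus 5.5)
The plan is a one-step bound from the mean value theorem in integral form, followed by induction on $t$ from $i+1$ to $T$. Fix $t>i$. Because $q_t=q'_t$, both states are produced by the same map $G_t:=F_t(\cdot,q_t)$: we have $\mu_t=G_t(\mu_{t-1})$ and $\mu'_t=G_t(\mu'_{t-1})$. Parametrize the segment by $\gamma(s)=\mu'_{t-1}+s(\mu_{t-1}-\mu'_{t-1})$ for $s\in[0,1]$. By hypothesis, $G_t$ is continuously differentiable on an open neighbourhood of $\gamma([0,1])$. The chain rule and the fundamental theorem of calculus then give
\begin{equation*}
 \mu_t-\mu'_t=\int_0^1 D_\mu F_t(\gamma(s),q_t)\,(\mu_{t-1}-\mu'_{t-1})\,ds .
\end{equation*}
The integrand is continuous in $s$, so the integral is well defined.

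Next, take Euclidean norms and move the norm inside the integral. Using the induced operator norm bound $\|D_\mu F_t\|_2\le\kappa_t$ along the segment yields the one-step contraction
\begin{equation*}
 \|\mu_t-\mu'_t\|_2\le\kappa_t\|\mu_{t-1}-\mu'_{t-1}\|_2 .
\end{equation*}
If $\mu_{t-1}=\mu'_{t-1}$, the segment is a single point. In that case $\mu_t=\mu'_t$ trivially, and the inequality still holds.

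Then iterate. The inductive claim is $\|\mu_t-\mu'_t\|_2\le\bigl(\prod_{s=i+1}^{t}\kappa_s\bigr)\|\mu_i-\mu'_i\|_2$, with the empty product equal to $1$ at $t=i$. Combining the step-$(t-1)$ claim with the one-step bound at $t$ gives the step-$t$ claim, because every $\kappa_s$ is nonnegative. Setting $t=T$ yields Equation~\ref{eq:proof-contraction}. For the final sentence, $0\le\kappa_t\le\kappa$ gives $\prod_{t=i+1}^T\kappa_t\le\kappa^{T-i}$ by monotonicity of products of nonnegative numbers.

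The only delicate point is the mean value step. The vector-valued mean value theorem does not hold as an equality, so I use the integral form rather than a pointwise version. I also need the hypothesis to cover the whole segment between the two \emph{actual} states $\mu_{t-1}$ and $\mu'_{t-1}$ at each step, not just a neighbourhood of one trajectory. The statement supplies exactly this, so no convexity or global Lipschitz assumption is needed. State-dependent gates are absorbed into $F_t$, so the full Jacobian $D_\mu F_t$ includes their derivatives. Nothing further is required.
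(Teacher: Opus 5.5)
Your proposal is correct and takes essentially the same approach as the paper, which writes $\Delta_t=\int_0^1 J_t(s)\Delta_{t-1}\,ds$ with $J_t(s)$ the Jacobian along the segment from $\mu'_{t-1}$ to $\mu_{t-1}$, bounds this by $\kappa_t\|\Delta_{t-1}\|_2$, and iterates. Your remarks on the degenerate segment, the empty product at $t=i$, and the nonnegativity of the $\kappa_t$ just make explicit details the paper leaves implicit.
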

\begin{proof}
Set $\Delta_t=\mu_t-\mu'_t$ and $J_t(s)=D_\mu F_t(\mu'_{t-1}+s\Delta_{t-1},q_t)$. Integration along the connecting line segment gives
\begin{equation}
 \Delta_t=\int_0^1J_t(s)\Delta_{t-1}\,ds.
\end{equation}
Thus $\|\Delta_t\|_2\leq\int_0^1\kappa_t\|\Delta_{t-1}\|_2\,ds=\kappa_t\|\Delta_{t-1}\|_2$. Iterating this inequality proves the bound.
\end{proof}
To connect this condition to gated consolidation, consider a state-dependent gate $g_t(\mu)\in(0,1)^d$ and let $F_t(\mu)=g_t(\mu)\odot\mu+(\mathbf{1}_d-g_t(\mu))\odot q_t$. Fixing $q_t$ gives
\begin{equation}
 D_\mu F_t=\operatorname{diag}(g_t)+\operatorname{diag}(\mu-q_t)D_\mu g_t.
\end{equation}
Using $\|\operatorname{diag}(\xi)\|_2=\|\xi\|_\infty$, uniform bounds $\|g_t\|_\infty\leq\gamma_t$, $\|\mu-q_t\|_\infty\leq R_t$, and $\|D_\mu g_t\|_2\leq L_t$ permit $\kappa_t=\gamma_t+R_tL_t$. This bound accounts for both state retention and gate sensitivity; values below one ensure contraction.

\subsection{Direct Evidence Sensitivity}
Direct pooling in Equations~\ref{eq:dppm-evidence}--\ref{eq:dppm-pooling} yields a position-independent bound on the raw pooled vector, before RMS calibration.
\begin{proposition}[Bounded-score pooling]
\label{prop:direct-evidence}
Suppose $\max_t s_{t,j}-\min_t s_{t,j}\leq B$ for each $j$, where $B\geq0$ is independent of $T$. Then $w_{i,j}\geq b_T=[1+(T-1)e^B]^{-1}$. If only $q_i$ changes to $q'_i=q_i+\eta$, with $\eta\in\mathbb{R}^d$ and unchanged score vector $s'_i=s_i$, then
\begin{equation}
 \|\bar e'-\bar e\|_2\geq b_T\|\eta\|_2.
 \label{eq:proof-evidence}
\end{equation}
\end{proposition}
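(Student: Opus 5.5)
Both claims come down to one coordinate-wise fact about the softmax in Equation~\ref{eq:dppm-evidence}. For the weight bound, I fix a latent coordinate $j$ and write
\[
w_{i,j}=\Bigl[1+\sum_{\tau\neq i}\exp(s_{\tau,j}-s_{i,j})\Bigr]^{-1}.
\]
By hypothesis every difference satisfies $s_{\tau,j}-s_{i,j}\leq\max_t s_{t,j}-\min_t s_{t,j}\leq B$. Each of the $T-1$ exponentials is therefore at most $e^B$, which gives $w_{i,j}\geq[1+(T-1)e^B]^{-1}=b_T$. The bound uses nothing about the position $i$, so it holds for every segment. When $T=1$ it reads $w_{1,j}=1$, which is consistent.

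For the sensitivity claim, I use the assumption that only $q_i$ changes and that $s'_i=s_i$. The scores come from $z_t$ via $s_t=W_Ez_t$, so the scores of the other segments are untouched because $z_\tau$ is unchanged for $\tau\neq i$. Hence every score vector is identical before and after the perturbation, and so is every weight: $w'_t=w_t$ for all $t$. The temporal softmax is the only coupling across segments, and it is fixed here, which is the point of the unchanged-score hypothesis. Equation~\ref{eq:dppm-pooling} then gives
\[
\bar e'-\bar e=w_i\odot(q_i+\eta)-w_i\odot q_i=w_i\odot\eta.
\]

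To finish, I compute coordinate-wise:
\[
\|w_i\odot\eta\|_2^2=\sum_j w_{i,j}^2\eta_j^2\geq b_T^2\sum_j\eta_j^2,
\]
using the first part for each $j$. This yields $\|\bar e'-\bar e\|_2\geq b_T\|\eta\|_2$. Equivalently, $\operatorname{diag}(w_i)$ has smallest singular value $\min_j w_{i,j}\geq b_T$.

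I expect no real obstacle. The only delicate step is justifying that all weights, not just $w_i$, are unchanged, since otherwise cross terms from $\tau\neq i$ would appear. That needs the explicit observation that the scores depend on each segment's own normalized representation. It also needs the remark that the bound concerns the raw pooled vector $\bar e$ before the calibration $\mathcal{R}$, whose target scale $\rho_E$ would change with $q_i$.
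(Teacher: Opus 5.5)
Your proof is correct and follows the paper's argument essentially step for step. You rewrite $w_{i,j}$ as $[1+\sum_{t\neq i}e^{s_{t,j}-s_{i,j}}]^{-1}$, bound each exponential by $e^B$, observe that all weights are unchanged so $\bar e'-\bar e=\operatorname{diag}(w_i)\eta$, and bound coordinate-wise; your explicit reason why the other segments' scores stay fixed (each $z_\tau$ depends only on $q_\tau$) is a small clarification the paper leaves implicit.
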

\begin{proof}
Since $s_{t,j}-s_{i,j}\leq B$, dividing the softmax denominator by $e^{s_{i,j}}$ gives
\begin{equation}
 w_{i,j}=\frac{1}{1+\sum_{t\ne i}e^{s_{t,j}-s_{i,j}}}\geq b_T.
\end{equation}
Since all scores are unchanged, so are all weights. Hence $\bar e'-\bar e=\operatorname{diag}(w_i)\eta$, and $\|\bar e'-\bar e\|_2^2=\sum_jw_{i,j}^2\eta_j^2\geq b_T^2\|\eta\|_2^2$. Taking square roots proves the claim.
\end{proof}
Normalization bounds the scores independently of history length. Non-affine LayerNorm gives $\|\operatorname{LN}(q)\|_2\leq\sqrt d$. Cauchy--Schwarz gives $B=2\sqrt d\max_j\|(W_E)_{j,:}\|_2$. Also, $\operatorname{LN}(q+\zeta\mathbf{1}_d)=\operatorname{LN}(q)$ for scalar $\zeta$, giving score-preserving perturbations in real arithmetic.

For fixed $B$, $b_T=\Theta(T^{-1})$, independently of the source position $i$. If a nonzero perturbation also induces $\|\Delta\mu_i\|_2\leq L_{\mathrm{in}}\|\eta\|_2$, the raw Evidence lower bound exceeds the recurrent upper bound whenever $b_T>L_{\mathrm{in}}\kappa^{T-i}$. Here $L_{\mathrm{in}}$ bounds input sensitivity and $\kappa$ bounds contraction.

\subsection{The Need for Ordered Revision}
Direct evidence access preserves a contribution from each segment, but content-only pooling discards their order. For a permutation $\pi$, write $\pi Q=(q_{\pi(1)},\ldots,q_{\pi(T)})$.
\begin{proposition}[Order cannot be recovered from pooling]
\label{prop:pooling-order}
Under the Evidence scoring and calibration rules, $e(\pi Q)=e(Q)$ for every permutation $\pi$ of fixed segment representations. If $Q$ and $\pi Q$ require different answers to the same query, a deterministic predictor using only that query and $e(Q)$ cannot be correct on both.
\end{proposition}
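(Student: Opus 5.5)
The argument is a direct invariance check through each stage of the Evidence computation, followed by a one-line pigeonhole argument for the predictor claim.

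\emph{Step 1: the scores travel with their segments.} Each score vector $s_t=W_Ez_t$ depends only on its own segment, through $z_t=\operatorname{LN}(q_t)$, because $W_E$ is shared across time. Hence permuting $Q$ permutes the score vectors the same way: the score attached to position $t$ in $\pi Q$ is $s_{\pi(t)}$.

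\emph{Step 2: the softmax weights permute consistently.} For each coordinate $j$, the softmax denominator $\sum_{\tau}\exp(s_{\tau,j})$ is a symmetric function of the scores, so it is unchanged by reordering. Therefore the weight at position $t$ in $\pi Q$ equals $w_{\pi(t),j}$, the original weight of the segment now placed there.

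\emph{Step 3: the pooled vector and target scale are unchanged.} Write $\tilde w_t$ for the weight vector at position $t$ under $\pi Q$. Then
\[
\bar e(\pi Q)=\sum_t \tilde w_t\odot q_{\pi(t)}=\sum_t w_{\pi(t)}\odot q_{\pi(t)}=\bar e(Q),
\]
by reindexing the finite sum with $\pi$. The same reindexing applied to Equation~\ref{eq:dppm-evidence-scale} gives $\rho_E(\pi Q)=\rho_E(Q)$. Since the operator $\mathcal{R}$ in Equation~\ref{eq:dppm-calibration} is a deterministic function of $(\bar e,\rho_E)$ alone, including its floor $\epsilon$, it follows that $e(\pi Q)=e(Q)$.

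\emph{Step 4: the predictor claim.} Let $f(x,e)$ be any deterministic predictor. The two equal inputs give $f(x,e(\pi Q))=f(x,e(Q))$, so the predictor returns a single answer in both cases. If the correct answers for $Q$ and $\pi Q$ differ, that single answer can match at most one of them.

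There is no real obstacle. The only point that needs care is bookkeeping: each weight must stay paired with its own segment's content. This holds because the scores are content-only, with no positional term. It also helps to note explicitly that $T$ is unchanged by a permutation, so $b_T$ and any normalization that depends on the length are unaffected.
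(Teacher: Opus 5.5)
Your proposal is correct and matches the paper's proof. Like the paper, you show that permuting the segments reorders scores and weights together, so $\bar e$ and $\rho_E^2$ are unchanged, hence so is $e=\mathcal{R}(\bar e,\rho_E)$, and identical predictor inputs give one answer that cannot match two distinct targets; you simply spell out the bookkeeping (content-only scores, symmetric softmax denominators, reindexing the sums) that the paper compresses into one line.
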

\begin{proof}
Permuting inputs reorders scores and weights together. Both $\bar e=\sum_t w_t\odot q_t$ and $\rho_E^2=d^{-1}\sum_t w_t^\top(q_t\odot q_t)$ remain unchanged, and therefore so does $e=\mathcal{R}(\bar e,\rho_E)$. Identical predictor inputs produce the same answer, which cannot equal two distinct targets.
\end{proof}
Oppositely ordered conflicting updates under a last-update-wins rule illustrate this obstruction. Without separately encoded chronology, this motivates order-sensitive Delta updates.

\paragraph{Design implications.}
Under the stated conditions, direct Evidence accumulation mitigates the repeated attenuation of recurrent propagation. Its lower bound decreases with history length but carries no additional age factor. Pooling's permutation invariance motivates the complementary Delta path for ordered revision. Together, the results support combining direct evidence access with sequential state updates.

\end{document}